\def\eqref#1{equation~\ref{#1}}
\def\1{\bm{1}}
\def\mA{{\bm{A}}}
\DeclareMathAlphabet{\mathsfit}{\encodingdefault}{\sfdefault}{m}{sl}
\SetMathAlphabet{\mathsfit}{bold}{\encodingdefault}{\sfdefault}{bx}{n}
\DeclareMathOperator*{\argmax}{arg\,max}
\DeclareMathOperator*{\argmin}{arg\,min}
\definecolor{darkred}{rgb}{0.55, 0.0, 0.0}
\newcommand*\rot{\rotatebox{90}}
\title{Learnable Topological Features for Phylogenetic Inference via Graph Neural Networks}
\author{Cheng Zhang\\
School of Mathematical Sciences and Center for Statistical Science\\
Peking University, Beijing, China\\
\texttt{chengzhang@math.pku.edu.cn}
}
\begin{document}

\maketitle

\begin{abstract}
Structural information of phylogenetic tree topologies plays an important role in phylogenetic inference.
However, finding appropriate topological structures for specific phylogenetic inference tasks often requires significant design effort and domain expertise.
In this paper, we propose a novel structural representation method for phylogenetic inference based on learnable topological features.
By combining the raw node features that minimize the Dirichlet energy with modern graph representation learning techniques, our learnable topological features can provide efficient structural information of phylogenetic trees that automatically adapts to different downstream tasks without requiring domain expertise.
We demonstrate the effectiveness and efficiency of our method on a simulated data tree probability estimation task and a benchmark of challenging real data variational Bayesian phylogenetic inference problems.
\end{abstract}

\section{Introduction}
Phylogenetics is an important discipline of computational biology where the goal is to identify the evolutionary history and relationships among individuals or groups of biological entities.
In statistical approaches to phylogenetics, this has been formulated as an inference problem on hypotheses of shared history, i.e., \emph{phylogenetic trees}, based on observed sequence data (e.g., DNA, RNA, or protein sequences) under a model of evolution.
The phylogenetic tree defines a probabilistic graphical model, based on which the likelihood of the observed sequences can be efficiently computed \citep{Felsenstein03}.
Many statistical inference procedures therefore can be applied, including maximum likelihood and Bayesian approaches \citep{Felsenstein81, Yang97, Mau99, Huelsenbeck01b}.

Phylogenetic inference, however, has been challenging due to the composite parameter space of both continuous and discrete components (i.e., branch lengths and the tree topology) and the combinatorial explosion in the number of tree topologies with the number of sequences.
Harnessing the topological information of trees hence becomes crucial in the development of efficient phylogenetic inference algorithms.
For example, by assuming conditional independence of separated subtrees, \citet{Larget2013-et} showed that conditional clade distributions (CCDs) can provide more reliable tree probability estimation that generalizes beyond observed samples.
A similar approach was proposed to design more efficient proposals for tree movement when implementing Markov chain Monte Carlo (MCMC) algorithms for Bayesian phylogenetics \citep{Hhna2012-pm}.
Utilizing more sophisticated local topological structures, CCDs were later generalized to subsplit Bayesian networks (SBNs) that provide more flexible distributions over tree topologies \citep{SBN}.
Besides MCMC, variational Bayesian phylogenetics inference (VBPI) was recently proposed that leveraged SBNs and a structured amortization of branch lengths to deliver competitive posterior estimates in a more timely manner \citep{VBPI, VBPI-NF, Zhang22VBPI}.
\citet{Azouri21} used a machine learning approach to accelerate maximum likelihood tree-search algorithms by providing more informative topology moves.
Topological features have also been found useful for comparison and interpretation of the reconstructed phylogenies \citep{Matsen07,doi:10.1098/rstb.2021.0231}.
While these approaches prove effective in practice, they all rely on heuristic features (e.g., clades and subsplits) of phylogenetic trees that often require significant design effort and domain expertise, and may be insufficient for capturing complicated topological information.

Graph Neural Networks (GNNs) are an effective framework for learning representations of graph-structured data.
To encode the structural information about graphs, GNNs follow a neighborhood aggregation procedure that computes the representation vector of a node by recursively aggregating and transforming representation vectors of its neighboring nodes.
After the final iteration of aggregation, the representation of the entire graph can also be obtained by pooling all the node embeddings together via some permutation invariant operators \citep{Ying18}.
Many GNN variants have been proposed and have achieved superior performance on both node-level and graph-level representation learning tasks \citep{GCN,GraphSAGE,GGNN,Zhang18,Ying18}.
A natural idea, therefore, is to adapt GNNs to phylogenetic models for automatic topological feature learning.
However, the lack of node features for phylogenetic trees makes it challenging as most GNN variants assume fully observed node features at initialization.

In this paper, we propose a novel structural representation method for phylogenetic inference that automatically learns efficient topological features based on GNNs.
To obtain the initial node features for phylogenetic trees, we follow previous studies \citep{Zhu2002, Rossi2021} to minimize the Dirichlet energy, with one hot encoding for the tip nodes.
Unlike these previous studies, we present a fast linear time algorithm for Dirichlet energy minimization by taking advantage of the hierarchical structure of phylogenetic trees.
Moreover, we prove that these features are sufficient for identifying the corresponding tree topology, i.e., there is no information loss in our raw feature representations of phylogenetic trees.
These raw node features are then passed to GNNs for more sophisticated structure representation learning required by downstream tasks.
Experiments on a synthetic data tree probability estimation problem and a benchmark of challenging real data variational Bayesian phylogenetic inference problems demonstrate the effectiveness and efficiency of our method.

\section{Background}
\label{sec:background}

\paragraph{Notation}
A phylogenetic tree is denoted as $(\tau, \bm{q})$ where $\tau$ is a bifurcating tree that represents the evolutionary relationship of the species and $\bm{q}$ is a non-negative branch length vector that characterizes the amount of evolution along the edges of $\tau$.
The tip nodes of $\tau$ correspond to the observed species and the internal nodes of $\tau$ represent the unobserved characters (e.g., DNA bases) of the ancestral species.
The transition probability $P_{ij}(t)$ from character $i$ to character $j$ along an edge of length $t$ is often defined by a continuous-time substitution model (e.g., \citet{Jukes69}), whose stationary distribution is denoted as $\eta$.
Let $E(\tau)$ be the set of edges of $\tau$, $r$ be the root node (or any internal node if the tree is unrooted and the substitution model is reversible).
Let $\bm{Y}=\{Y_1,Y_2,\ldots,Y_M\}\in\Omega^{N\times M}$ be the observed sequences (with characters in $\Omega$) of length $M$ over $N$ species.

\paragraph{Phylogenetic posterior}
Assuming different sites $Y_i, i=1,\ldots, M$ are independent and identically distributed, the likelihood of observing $\bm{Y}$ given the phylogenetic tree $(\tau,\bm{q})$ takes the form 
\begin{equation}\label{eq:phylolikelihood}
p(\bm{Y}|\tau,\bm{q}) = \prod_{i=1}^Mp(Y_i|\tau,\bm{q}) = \prod_{i=1}^M\sum_{a^i}\eta(a_{r}^i)\prod_{(u,v)\in E(\tau)} P_{a_u^ia_v^i}(q_{uv}),
\end{equation}
where $a^i$ ranges over all extensions of $Y_i$ to the internal nodes with $a_u^i$ being the assigned character of node $u$.
The above phylogenetic likelihood function can be computed efficiently through the pruning algorithm \citep{Felsenstein03}.
Given a prior distribution $p(\tau,\bm{q})$ of the tree topology and the branch lengths, Bayesian phylogenetics then amounts to properly estimating the phylogenetic posterior $p(\tau,\bm{q}|\bm{Y}) \propto p(\bm{Y}|\tau,\bm{q})p(\tau,\bm{q})$.

\paragraph{Variational Bayesian phylogenetic inference}
Let $Q_{\bm{\phi}}(\tau)$ be an SBN-based distribution over the tree topologies and $Q_{\bm{\psi}}(\bm{q}|\tau)$ be a non-negative distribution over the branch lengths.
VBPI finds the best approximation to $p(\tau,\bm{q}|\bm{Y})$ from the family of products of $Q_{\bm{\phi}}(\tau)$ and $Q_{\bm{\psi}}(\bm{q}|\tau)$ by maximizing the following multi-sample lower bound
\begin{equation}\label{eq:elboVBPI}
L^{K}(\bm{\phi},\bm{\psi}) = \mathbb{E}_{Q_{\bm{\phi},\bm{\psi}}(\tau^{1:K},\bm{q}^{1:K})}\log \left(\frac1K\sum_{i=1}^K\frac{p(\bm{Y}|\tau^i,\bm{q}^i)p(\tau^i,\bm{q}^i)}{Q_{\bm{\phi}}(\tau^i)Q_{\bm{\psi}}(\bm{q}^i|\tau^i)}\right) \leq \log p(\bm{Y})
\end{equation}
where $Q_{\bm{\phi},\bm{\psi}}(\tau^{1:K},\bm{q}^{1:K})=\prod_{i=1}^KQ_{\bm{\phi}}(\tau^i)Q_{\bm{\psi}}(\bm{q}^i|\tau^i)$.
To properly parameterize the variational distributions, a support of the conditional probability tables (CPTs) is often acquired from a sample of tree topologies via fast heuristic bootstrap methods \citep{UFBOOT, VBPI}.
The branch length approximation $Q_{\bm{\psi}}(\bm{q}|\tau)$ is taken to be the diagonal Lognormal distribution
\[
Q_{\bm{\psi}}(\bm{q}|\tau) = \prod\nolimits_{e\in E(\tau)} p^{\mathrm{Lognormal}}\left(q_e\;|\;\mu(e,\tau),\sigma(e,\tau)\right)
\]
where $\mu(e,\tau),\sigma(e,\tau)$ are amortized over the tree topology space via shared local structures (i.e., split and primary subsplit pairs (PSPs)), which are available from the support of CPTs.
More details about structured amortization, VBPI and SBNs can be found in section \ref{sec:bl_vbpi} and Appendix \ref{sec:sbn}.

\paragraph{Graph neural networks}
Let $G=(V,E)$ denote a graph with node feature vectors $\bm{X}_v$ for node $v\in V$, and $\mathcal{N}(v)$ denote the set of nodes adjacent to $v$.
GNNs iteratively update the representation of a node by running a message passing (MP) scheme for $T$ time steps.
During each MP time step, the representation vectors of each node are updated based on the aggregated messages from its neighbors as follows
\[
\bm{h}_v^{(t+1)} = \mathrm{UPDATE}^{(t)}\left(\bm{h}_v^{(t)}, \bm{m}_v^{(t+1)}\right), \quad \bm{m}_v^{(t+1)} = \mathrm{AGG}^{(t)}\left(\left\{\bm{h}_u^{(t)}: u\in\mathcal{N}(v)\right\}\right)
\]
where $\bm{h}_v^{(t)}$ is the feature vector of node $v$ at time step $t$, with initialization $\bm{h}_v^{(0)} = \bm{X}_v$, $\mathrm{UPDATE}^{(t)}$ is the update function, and $\mathrm{AGG}^{(t)}$ is the aggregation function.
A number of powerful GNNs with different implementations of the update and aggregation functions have been proposed \citep{GCN,GraphSAGE,GGNN,GAT,GIN,EDGE}.
In additional to the local node-level features, GNNs can also provide features for the entire graph.
To learn these global features, an additional $\mathrm{READOUT}$ function is often introduced to aggregate node features from the final iteration
\[
\bm{h}_G=\mathrm{READOUT}\left(\left\{\bm{h}_v^{(T)}: v\in V\right\}\right).
\]
$\mathrm{READOUT}$ can be any function that is permutation invariant to the node features.

\section{Proposed Method}
\label{headings}

In this section, we propose a general approach that automatically learns topological features directly from phylogenetic trees.
We first introduce a simple embedding method that provides raw features for the nodes of phylogenetic trees, together with an efficient linear time algorithm for obtaining these raw features and a discussion on some of their theoretical properties regarding tree topology representation.
We then describe how these raw features can be adapted to learn efficient representations of certain structures of trees (e.g., edges) for downstream tasks.

\begin{figure}
\begin{center}
\includegraphics[width=1.0\textwidth]{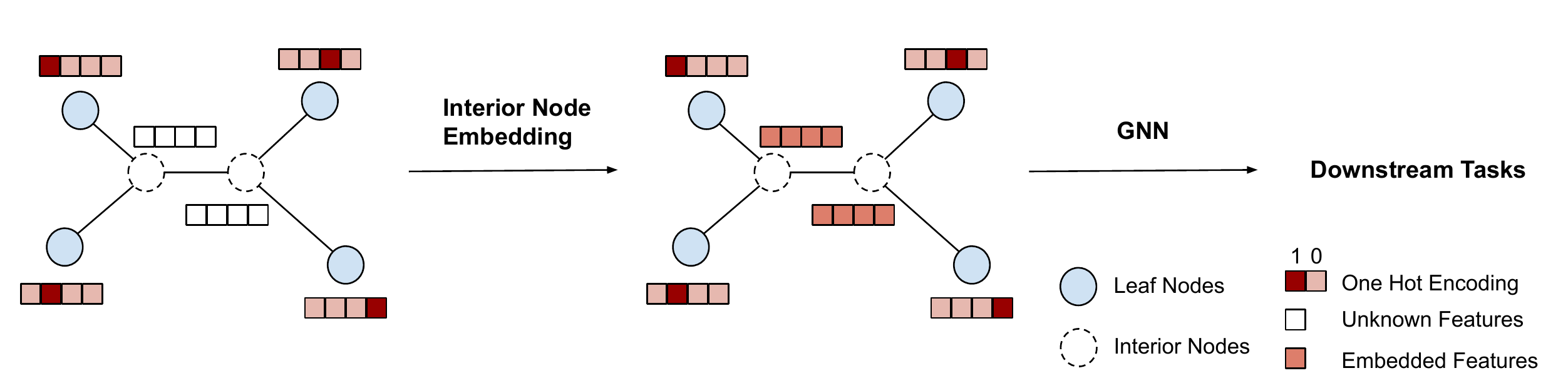}
\end{center}
\caption{An overview of the proposed topological feature learning framework for phylogenetic inference.
{\bf Left}: A phylogenetic tree topology with one hot encoding for the tip nodes and missing features for the interior nodes.
{\bf Middle}: Interior node embedding via Dirichlet energy minimization.
{\bf Right}: Subsequently, the tree topology with embedded node features are fed into a GNN model for more sophisticated tree structure representation learning required by downstream tasks.}
\end{figure}

\subsection{Interior Node Embedding}\label{sec:nodeemb}
Learning tree structure features directly from tree topologies often requires raw node/edge features, as typically assumed in most GNN models.
Unfortunately, this is not the case for phylogenetic models.
Although we can use one hot encoding for the tip nodes according to their corresponding species (taxa names only, not the sequences), the interior nodes still lack original features.
The first step of tree structure representation learning for phylogenetic models, therefore, is to properly input those missing features for the interior nodes. 
Following previous studies \citep{Zhu2002, Rossi2021}, we make a common assumption that the node features change smoothly across the tree topologies (i.e., the features of every node are similar to those of the neighbors).
A widely used criterion of smoothness for functions defined on nodes of a graph is the \emph{Dirichlet energy}.
Given a tree topology $\tau=(V, E)$ and a function $f: V\mapsto \mathbb{R}^d$, the Dirichlet energy is defined as 
\[
\ell(f, \tau) = \sum_{(u,v)\in E}\|f(u) - f(v)\|^2.
\]
Let $V=V^{b}\cup V^{o}$, where $V^{b}$ denotes the set of leaf nodes and $V^{o}$ denotes the set of interior nodes.
Let $\bm{X}^b=\{\bm{x}_v| v\in V^b\}$ be the set of one hot embeddings for the leaf nodes.
The interior node features $\bm{X}^o=\{\bm{x}_v|v\in V^o\}$ then can be obtained by minimizing the Dirichlet energy 
\[
\widehat{\bm{X}^o} = \argmin_{\bm{X}^o}\ell(\bm{X}^o,\bm{X}^b,\tau) =\argmin_{\bm{X}^o} \sum_{(u,v)\in E}\|\bm{x}_u - \bm{x}_v\|^2.
\]

\subsubsection{A Linear Time Two-pass Algorithm}
Note that the above Dirichlet energy function is convex, its minimizer therefore can be obtained by solving the following optimality condition
\begin{equation}\label{eq:opt}
\frac{\partial \ell(\bm{X}^o,\bm{X}^b,\tau)}{\partial \bm{X}^o}(\widehat{\bm{X}^o}) = \bm{0}.
\end{equation}
It turns out that \eqref{eq:opt} has a close-form solution based on matrix inversion.
However, as matrix inversion scales cubically in general, it is infeasible for graphs with many nodes.
Fortunately, by leveraging the hierarchical structure of phylogenetic trees, we can design a more efficient linear time algorithm for the solution of \eqref{eq:opt} as follows.
We first rewrite \eqref{eq:opt} as a system of linear equations
\begin{equation}\label{eq:local_balance_condition}
\sum\nolimits_{v\in \mathcal{N}(u)}\left(\widehat{\bm{x}}_u-\widehat{\bm{x}}_v\right) = \bm{0},\quad \forall u \in V^o,\qquad \widehat{\bm{x}}_v = \bm{x}_v, \quad \forall v \in V^b,
\end{equation}
where $\mathcal{N}(u)$ is the set of neighbors of node $u$.
Given a topological ordering induced by the tree\footnote{This is trivial for rooted trees since they are directed. For unrooted trees, we can choose an interior node as the root node and use the topological ordering of the corresponding rooted trees.}, we can obtain the solution within a two-pass sweep through the tree topology, similar to the Thomas algorithm for solving tridiagonal systems of linear equations \citep{Thomas49}.
In the first pass, we traverse the tree in a postorder fashion and express the node features as a linear function of those of their parents,
\begin{equation}\label{eq:forward}
\widehat{\bm{x}}_u = c_u\widehat{\bm{x}}_{\pi_u} + \bm{d}_u,
\end{equation}
for all the nodes expect the root node, where $\pi_u$ denotes the parent node of $u$.
More specifically, we first initialize $c_u=0,\bm{d}_u=\bm{x}_u$ for all leaf nodes $u\in V^b$.
For all the interior nodes except the root node, we compute $c_u, \bm{d}_u$ recursively as follows (see a detailed derivation in Appendix \ref{sec:two_pass_algo})
\begin{equation}\label{eq:update}
c_u = \frac{1}{|\mathcal{N}(u)|-\sum_{v\in \mathrm{ch}(u)}c_v}, \quad \bm{d}_u = \frac{\sum_{v\in\mathrm{ch}(u)}\bm{d}_v}{|\mathcal{N}(u)|-\sum_{v\in \mathrm{ch}(u)}c_v},
\end{equation}
where $\mathrm{ch}(u)$ denotes the set of child nodes of $u$.
In the second pass, we traverse the tree in a preorder fashion and compute the solution by back substitution.
Concretely, at the root node $r$, given \eqref{eq:forward} for all the child nodes from the first pass, we can compute the node feature directly from \eqref{eq:local_balance_condition} as below
\begin{equation}\label{eq:root_node_feature}
\widehat{\bm{x}}_r =  \frac{\sum_{v\in\mathrm{ch}(r)}\bm{d}_v}{|\mathcal{N}(r)|-\sum_{v\in \mathrm{ch}(r)}c_v}.
\end{equation}
For all the other interior nodes, the node features can be obtained via \eqref{eq:forward} by substituting the learned features for the parent nodes.
We summarize our two-pass algorithm in Algorithm \ref{alg:two_pass}.
Moreover, the algorithm is numerically stable due to the following lemma (proof in Appendix \ref{sec:proof}).
\begin{restatable}{lemma}{twopasslemma}\label{lemma:two_pass_stable}
Let $\lambda = \min_{u\in V^o\backslash\{r\}}|\mathcal{N}(u)|$. For all interior node $u \in V^o\backslash\{r\}$, $0 \leq c_u\leq \frac1{\lambda-1}$.
\end{restatable}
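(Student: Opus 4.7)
The plan is to induct in postorder on the tree, mirroring the order in which the first pass of Algorithm~\ref{alg:two_pass} produces the coefficients $c_u$. The base case is the leaves: for every $v \in V^b$ the algorithm sets $c_v = 0$, which trivially satisfies $0 \leq c_v \leq 1/(\lambda-1)$ (this uses $\lambda \geq 2$, automatic for bifurcating trees since every non-root interior node has three neighbors).

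For the inductive step, fix an interior non-root node $u \in V^o \setminus \{r\}$ and assume the bound already holds for every child $v \in \mathrm{ch}(u)$ — interior children by the induction hypothesis, leaf children by the base case. Summing over children gives
\begin{equation*}
\sum_{v\in \mathrm{ch}(u)} c_v \;\leq\; \frac{|\mathrm{ch}(u)|}{\lambda - 1} \;=\; \frac{|\mathcal{N}(u)| - 1}{\lambda - 1},
\end{equation*}
where the last equality uses the fact that a non-root node has exactly one parent among its neighbors. Substituting into the recursion \eqref{eq:update}, the denominator of $c_u$ is at least
\begin{equation*}
|\mathcal{N}(u)| - \frac{|\mathcal{N}(u)| - 1}{\lambda - 1} \;=\; \frac{(\lambda - 2)\,|\mathcal{N}(u)| + 1}{\lambda - 1}.
\end{equation*}

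The key step is to observe that this last expression is nondecreasing in $|\mathcal{N}(u)|$ whenever $\lambda \geq 2$, and that by the definition of $\lambda$ we have $|\mathcal{N}(u)| \geq \lambda$. Plugging in the worst case $|\mathcal{N}(u)| = \lambda$ collapses the expression to $(\lambda-1)^2/(\lambda-1) = \lambda - 1$. Hence the denominator in \eqref{eq:update} is both strictly positive and bounded below by $\lambda - 1$, yielding simultaneously $c_u > 0$ and $c_u \leq 1/(\lambda - 1)$ and closing the induction. The only subtle point — really the main (mild) obstacle — is making sure the induction hypothesis applies uniformly across leaf and interior children (trivial once one notes that $c_v = 0$ satisfies the bound as well) and verifying the direction of monotonicity in $|\mathcal{N}(u)|$; neither step requires more than a one-line algebraic check.
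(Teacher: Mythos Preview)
Your proof is correct and follows essentially the same route as the paper's: a postorder induction with the leaf base case $c_v=0$, the same bound $\sum_{v\in\mathrm{ch}(u)} c_v \le (|\mathcal{N}(u)|-1)/(\lambda-1)$, and the same final inequality $|\mathcal{N}(u)| - (|\mathcal{N}(u)|-1)/(\lambda-1) \ge \lambda-1$, with only cosmetic differences in how that last algebraic step is presented.
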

Besides bifurcating phylogenetic trees, the above two-pass algorithm can be easily adapted to interior node embedding for general tree-shaped graphs with given tip node features.

\begin{algorithm*}[tb]
\caption{A Two-pass Algorithm for Interior Node Embedding}
\begin{algorithmic}[1]
\State {\bfseries Input:} Tree topology $\tau=(V, E)$, where $V=V^b\cup V^o$; Features for the tip nodes $\{\bm{x}_u|u\in V^b\}$.
\State Initialize $c_u=0, \bm{d}_u=\bm{x}_u,\forall u \in V^b$.
\State Traverse the tree topology in a postorder fashion. For any interior node $u$ that is not the root node, compute $c_u$ and $\bm{d}_u$ as in \eqref{eq:update}.
\State Traverse the tree topology in a preorder fashion. For the root node $r$, compute the node feature as in \eqref{eq:root_node_feature}. For any other interior node $u$, compute the node feature as $\widehat{\bm{x}}_u = c_u \widehat{\bm{x}}_{\pi_u} + \bm{d}_u$.
\State \Return $\{\widehat{\bm{x}}_u|u\in V^o\}$.
\end{algorithmic}
\label{alg:two_pass}
\end{algorithm*}

\subsubsection{Tree Topology Representation Power}
In this section, we discuss some theoretical properties regarding the tree topology presentation power of the node features introduced above.
We start with a useful lemma that elucidates an important behavior of the solution to the linear system \ref{eq:local_balance_condition}, which is similar to the solutions to elliptic equations.

\begin{restatable}[Extremum Principle]{lemma}{EP}\label{lemma:ep}
Let $\{\widehat{\bm{x}}_u\in\mathbb{R}^d|u\in V\}$ be a set of $d$-dimensional node features that satisfies equations \ref{eq:local_balance_condition}.
$\forall 1\leq n\leq d$, let $\widehat{\bm{X}}[n] = \{\widehat{\bm{x}}_u[n]|u\in V\}$ be the set of the $n$-th components of node features.
Then, $\forall 1\leq n\leq d$, we have: (i) the extremum values (i.e., maximum and minimum) of $\widehat{\bm{X}}[n]$ can be achieved at some tip nodes;
(ii) if the extremum values are achieved at some interior nodes, then $\widehat{\bm{X}}[n]$ has only one member, or in other words, $\widehat{\bm{x}}_u[n]$ is the same $\forall u\in V$.
\end{restatable}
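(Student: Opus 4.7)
The plan is to recognize that for each fixed coordinate $n$, the local balance condition at interior nodes is exactly the discrete mean value property, and then carry out the standard discrete maximum principle argument on the tree. Fix $n$ and write $f(u) = \widehat{\bm{x}}_u[n]$. Rewriting \eqref{eq:local_balance_condition} componentwise gives, for every interior node $u\in V^o$,
\[
f(u) = \frac{1}{|\mathcal{N}(u)|}\sum_{v\in\mathcal{N}(u)} f(v),
\]
so $f(u)$ is the arithmetic mean of the values at its neighbors.

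Next, let $M = \max_{u\in V} f(u)$ and consider the level set $S = \{u\in V: f(u) = M\}$, which is nonempty because $V$ is finite. The key observation I would establish is a propagation property: if $u\in S$ is an interior node, then every neighbor of $u$ also lies in $S$. This follows from the mean value identity above, since $M = f(u)$ is an average of quantities bounded above by $M$, and the average equals $M$ only when every term equals $M$. Because the tree topology $\tau$ is connected, iterating the propagation starting from any interior node in $S$ shows that $S$ contains every node reachable by a path through interior vertices; in particular, since every interior vertex has at least one tip descendant, $S$ must contain a tip node. Applying the same argument to $-f$ handles the minimum, establishing part~(i).

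For part~(ii), suppose the maximum of $f$ is attained at some interior node $u^\star$. Then $u^\star\in S$, and the propagation argument from the previous paragraph extends $S$ across every edge of $\tau$; by connectedness of $\tau$, $S = V$, so $f$ is constant. The analogous reasoning applies when the minimum is attained at an interior node. In either case, $\widehat{\bm{X}}[n]$ reduces to a single value, which is exactly the claim of~(ii).

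I do not expect a real obstacle here: the lemma is the discrete analogue of the strong maximum principle for harmonic functions, and the tree's connectedness together with the finiteness of $V$ makes the propagation argument terminate cleanly. The only point requiring a little care is keeping the argument coordinatewise, since the balance condition in \eqref{eq:local_balance_condition} is vector-valued; this is handled uniformly by applying the scalar argument to each component $f(u) = \widehat{\bm{x}}_u[n]$ for $1\le n\le d$.
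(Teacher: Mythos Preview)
Your proposal is correct and follows essentially the same approach as the paper's proof: both rewrite the local balance condition as the discrete mean value property at interior nodes, then use the observation that an extremum at an interior node forces all its neighbors to attain the same value, and finally invoke connectedness of $\tau$ to propagate this to all of $V$. Your version is simply more explicit (introducing the level set $S$ and spelling out the propagation step), while the paper states these points more tersely.
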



\begin{restatable}{theorem}{simplex}\label{thm:simplex}
Let $N$ be the number of tip nodes.
Let $\{\widehat{\bm{x}}_u\in\mathbb{R}^N|u\in V\}$ be the solution to the linear system \ref{eq:local_balance_condition} with one hot encoding for the tip nodes.
Then, $\forall u\in V^o$, we have
\[
(i)\; 0<\widehat{\bm{x}}_u[n]<1 , \quad \forall 1\leq n\leq N, \quad \text{and} \quad (ii)\; \sum\nolimits_{n=1}^N \widehat{\bm{x}}_u[n] = 1.
\] 
\end{restatable}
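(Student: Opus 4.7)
The plan is to derive both parts as essentially direct consequences of the Extremum Principle (Lemma \ref{lemma:ep}), without invoking any closed-form expression or the two-pass algorithm. The role of the one-hot encoding is to pin down the boundary values in a convenient way, and the role of EP is to transfer these boundary constraints to the interior.

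For part (i), I would fix a component $n$ and observe that the tip values $\{\widehat{\bm{x}}_u[n] : u \in V^b\}$ consist of a single $1$ (at the tip labeled $n$) and $N-1$ zeros. By part (i) of the Extremum Principle, the extremum values of $\widehat{\bm{X}}[n]$ are attained at some tip nodes, so $\max \widehat{\bm{X}}[n] = 1$ and $\min \widehat{\bm{X}}[n] = 0$. This immediately gives $0 \leq \widehat{\bm{x}}_u[n] \leq 1$ at every interior node. To upgrade the inequalities to strict ones, I would argue by contradiction: if $\widehat{\bm{x}}_u[n] = 1$ at some interior $u$, then an extremum is also achieved in the interior, so by part (ii) of EP the values $\widehat{\bm{x}}_v[n]$ are constant across $v \in V$, contradicting the one-hot boundary values as soon as $N \geq 2$. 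The case $\widehat{\bm{x}}_u[n] = 0$ is symmetric, and $N=1$ is vacuous (no interior nodes to worry about).

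For part (ii), I would define the scalar field $s_u = \sum_{n=1}^N \widehat{\bm{x}}_u[n]$ and sum the linear system \eqref{eq:local_balance_condition} over $n$. Since summation commutes with the linear operator, $s$ itself satisfies the same Dirichlet problem with the simpler boundary data $s_v = 1$ for all $v \in V^b$ (each one-hot tip vector sums to $1$). Applying EP to the scalar $s$, its maximum and minimum over $V$ are attained at tip nodes, where they both equal $1$; hence $s_u = 1$ for every $u \in V$, which is exactly the desired identity at the interior nodes.

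The main obstacle is the strict inequality in part (i): this is where EP part (ii) is essential, together with the mild side observation that $N \geq 2$ whenever interior nodes are present, which always holds for nontrivial bifurcating phylogenies. The remaining work is mostly bookkeeping: verifying that summing the defining equations componentwise preserves the linear system for $s$, and noting that EP applies uniformly to any solution of \eqref{eq:local_balance_condition}, including scalar ones.
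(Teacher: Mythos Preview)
Your argument is correct. Part~(i) matches the paper: both invoke the Extremum Principle (Lemma~\ref{lemma:ep}), and your explicit handling of the strict inequality via EP~(ii) simply fleshes out what the paper compresses into ``follows immediately from Lemma~\ref{lemma:ep}''. Part~(ii), however, takes a different route. You sum the components to obtain a scalar field $s_u = \sum_n \widehat{\bm{x}}_u[n]$, observe that by linearity $s$ satisfies \eqref{eq:local_balance_condition} with constant boundary data equal to~$1$, and conclude $s\equiv 1$ via EP. The paper instead uses the variational characterization: it projects the solution onto the affine hyperplane $S=\{\bm{x}:\sum_n\bm{x}[n]=1\}$, notes that the projection cannot increase the Dirichlet energy, and from optimality deduces that every edge increment $\widehat{\bm{x}}_u-\widehat{\bm{x}}_v$ is parallel to $S$, so connectivity and the fact that the tips already lie in $S$ force all node features into $S$. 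Your approach is shorter and stays entirely within the linear-system/maximum-principle framework already set up for part~(i); the paper's projection argument is a bit longer but emphasizes the energy-minimization viewpoint under which the embedding was introduced.
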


The complete proofs of Lemma \ref{lemma:ep} and Theorem \ref{thm:simplex} are provided in Appendix \ref{sec:proof}.
When the tip node features are linearly independent, a similar proposition holds when we consider the coefficients of the linear combination of the tip node features for the interior node features instead.
\begin{restatable}{corollary}{convexhull}\label{coro:convex_hull}
Suppose that the tip node features are linearly independent, the interior node features obtained from the solution to the linear system \ref{eq:local_balance_condition} all lie in the interior of the convex hull of all tip node features.
\end{restatable}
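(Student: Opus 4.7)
The plan is to reduce the corollary directly to Theorem \ref{thm:simplex} by exploiting the linearity of the system \ref{eq:local_balance_condition} in the tip node features. Because the system is linear in $\widehat{\bm{X}}$ with right-hand side determined by the tip features, the map from tip features to interior features is linear. So I would first write, for each interior node $u \in V^o$, the solution as a linear combination of the tip features,
\[
\widehat{\bm{x}}_u = \sum_{v\in V^b} \alpha_{uv}\,\bm{x}_v,
\]
where the coefficients $\alpha_{uv}$ depend only on the tree topology $\tau$ and not on the particular values of $\bm{x}_v$.

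Next, I would pin down these coefficients by evaluating in the one-hot case. Plugging $\bm{x}_v = \bm{e}_v$ (the one-hot encoding indexed by the $N$ tip nodes) into the system and invoking uniqueness of the solution, the $n$-th coordinate of $\widehat{\bm{x}}_u$ in Theorem \ref{thm:simplex} is exactly the coefficient $\alpha_{u v_n}$ associated with the $n$-th tip node. Thus the coefficients $\{\alpha_{uv}\}_{v\in V^b}$ in the general case are identical to the one-hot interior features studied in Theorem \ref{thm:simplex}, and by that theorem we have
\[
0 < \alpha_{uv} < 1, \qquad \sum_{v\in V^b}\alpha_{uv} = 1.
\]
Hence every $\widehat{\bm{x}}_u$ is a strict convex combination of the tip features.

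Finally, I would use the linear independence hypothesis to promote ``strict convex combination'' to ``interior of the convex hull''. Linear independence of the $\bm{x}_v$'s implies affine independence, so the convex hull is an $(N-1)$-simplex whose relative interior consists precisely of points whose barycentric coordinates are all strictly positive. Since the $\alpha_{uv}$ are strictly positive and sum to $1$, the uniqueness of the barycentric representation (again from affine independence) identifies them as the barycentric coordinates of $\widehat{\bm{x}}_u$, placing it in the interior as claimed.

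The main obstacle is really just the bookkeeping in the first two steps: one has to be careful that the coefficients extracted in the general case coincide with the one-hot solution, which relies on the uniqueness of the solution to \ref{eq:local_balance_condition} (guaranteed by convexity and strict convexity of the Dirichlet energy on $\bm{X}^o$, already used implicitly in Section \ref{sec:nodeemb}). Everything else is a clean application of Theorem \ref{thm:simplex} together with the standard characterization of the interior of a simplex via barycentric coordinates.
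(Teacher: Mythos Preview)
Your proposal is correct and follows essentially the same route as the paper: both reduce to Theorem~\ref{thm:simplex} by writing each interior feature as a linear combination of the tip features and identifying the coefficients with the one-hot solution (the paper phrases this via a matrix $\mA$ and coordinates $\widehat{\bm{y}}_u=\mA^{-1}\widehat{\bm{x}}_u$ satisfying the one-hot system, which is exactly your $\alpha_{uv}$'s). Your final paragraph spelling out the affine-independence/barycentric argument is a bit more explicit than the paper's ``which concludes the proof,'' but the substance is the same.
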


The proof is provided in Appendix \ref{sec:proof}.
The following lemma reveals a key property of the nodes that are adjacent to the boundary of the tree topology in the embedded feature space.
\begin{restatable}{lemma}{merge}\label{lemma:merge}
Let $\{\widehat{\bm{x}}_u|u\in V\}$ be the solution to the linear system \ref{eq:local_balance_condition}, with linearly independent tip node features.
Let $\{\widehat{\bm{x}}_u = \sum_{v\in V^b}a_u^v\bm{x}_v|u\in V^o\}$ be the convex combination representations of the interior node features.
For any tip node $v\in V^b$, we have
\[
  u^\ast = \argmax_{u\in V^o} a_u^v  \quad \Leftrightarrow \quad u^\ast\in \mathcal{N}(v).
\]
\end{restatable}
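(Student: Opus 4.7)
The plan is to linearize the problem and then apply a local averaging (maximum) principle. Fix a tip $v \in V^b$, and let $u_v$ denote its unique interior neighbor (a pendant tip in a bifurcating tree has exactly one neighbor, which is interior). Since equation \ref{eq:local_balance_condition} is linear in the tip features and the $\{\bm{x}_w\}_{w\in V^b}$ are linearly independent by assumption, the convex-combination coefficients $\{a_u^v\}_{u\in V^o}$ must themselves solve the scalar version of \ref{eq:local_balance_condition} obtained by imposing the tip data $a_w^v = \delta_{wv}$. Equivalently, with the convention $a_w^v := \delta_{wv}$ for $w\in V^b$, the averaging identity
\[
a_u^v \;=\; \frac{1}{|\mathcal{N}(u)|}\sum_{w\in\mathcal{N}(u)} a_w^v, \qquad \forall u\in V^o,
\]
holds, and Theorem \ref{thm:simplex} applied to this one-hot scalar problem yields $0 < a_u^v < 1$ for every interior node $u$.

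Let $M := \max_{u\in V^o} a_u^v$ and $S_{\max} := \{u \in V^o : a_u^v = M\}$; note $M>0$. I will show $S_{\max} = \{u_v\}$, which gives both implications of the lemma at once. Suppose for contradiction that some $u^\ast \in S_{\max}$ differs from $u_v$. Then $v \notin \mathcal{N}(u^\ast)$, so each neighbor $w$ of $u^\ast$ contributes either $a_w^v = 0$ (when $w$ is a tip different from $v$) or $a_w^v \leq M$ (when $w$ is interior). The averaging identity at $u^\ast$ forces the mean of these values to equal $M$, which is possible only if every neighbor attains $M$. Since $M>0$, this rules out tip neighbors entirely, and every interior neighbor of $u^\ast$ must also lie in $S_{\max}$.

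To close the argument I root the tree at $u_v$, which makes $u^\ast$ a strict descendant with a nonempty subtree pointing away from $u_v$. Applying the previous step inductively to descendants produces a sequence of nodes, each in $S_{\max}\setminus\{u_v\}$ and each with no tip neighbor, that strictly descends in the rooted subtree of $u^\ast$. Finiteness of the tree forces the descent to terminate at an interior node adjacent to an actual tip of $\tau$, contradicting the "no tip neighbor" property just established. Hence $S_{\max} \subseteq \{u_v\}$, and since the argmax is nonempty, $S_{\max} = \{u_v\}$, which proves both directions of the equivalence.

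The main obstacle is organizing the propagation cleanly: the averaging argument cannot rule out tip neighbors at $u_v$ itself, because there the tip $v$ contributes value $1>M$, which is compatible with $u_v\in S_{\max}$. So the induction must be steered strictly into the subtree away from $u_v$. Rooting at $u_v$ is the cleanest way to guarantee that every node visited by the iteration is a strict descendant of $u^\ast$, where the "all neighbors in $S_{\max}$ and interior" conclusion applies without exception.
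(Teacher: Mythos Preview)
Your proof is correct and rests on the same core mechanism as the paper's---the discrete maximum principle for the coefficients $a_u^v$ coming from the averaging identity---but the two arguments are organized differently. The paper deletes the tip $v$ to obtain a reduced tree $\tau'$; on $\tau'$ the averaging identity still holds at every interior node except $\bar v=u_v$, so Lemma~\ref{lemma:ep} applied to $\tau'$ forces the maximum of $\{a_u^v\}$ over $V\setminus\{v\}$ to occur at the boundary of $\tau'$ (where all values are $0$) or at $\bar v$, and strictness follows from the rigidity clause of Lemma~\ref{lemma:ep}. You instead keep the full tree, root it at $u_v$, and run the propagation by hand: any maximizer $u^\ast\neq u_v$ must have only interior neighbors, all maximizers, so descending toward the leaves yields a contradiction. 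Your version is self-contained and avoids modifying the graph, at the cost of re-deriving the extremum principle inline; the paper's version is shorter because it reuses Lemma~\ref{lemma:ep} as a black box after the one-line trick of removing $v$. Both routes are equally valid.
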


\begin{restatable}[Identifiability]{theorem}{ID}\label{thm:id}
Let $\bm{X}^o=\{\widehat{\bm{x}}_u|u\in V^o\}$ and $\bm{Z}^o=\{\widehat{\bm{z}}_u|u\in V^o\}$ be the sets of interior node features that minimizes the Dirichlet energy for phylogenetic tree topologies $\tau_x$ and $\tau_z$ respectively, given the same linearly independent tip node features.
If $\bm{X}^o = \bm{Z}^o$, then $\tau_x=\tau_z$.
\end{restatable}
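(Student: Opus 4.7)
The plan is to induct on the number of tip nodes $N$, using Lemma \ref{lemma:merge} as the key device for reconstructing tree structure from interior features. The base case $N = 3$ is immediate since there is only one unrooted bifurcating topology on three tips. For the inductive step, I first exploit the assumed linear independence of the tip features to extract, from each interior feature $\widehat{\bm{x}}_u$, the unique convex-combination coefficients $a_u^v$ with $\widehat{\bm{x}}_u = \sum_{v \in V^b} a_u^v \bm{x}_v$ (existence and positivity of these coefficients are guaranteed by Corollary \ref{coro:convex_hull} together with Theorem \ref{thm:simplex}). These coefficients are a function of the feature vector and of the fixed tip features alone, so they are identical across $\tau_x$ and $\tau_z$.

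By Lemma \ref{lemma:merge}, for each tip $v$ the unique interior neighbor is characterized as the argmax of $a_u^v$ over interior features, so the tip-to-interior adjacency pattern is determined entirely by the multiset $\bm{X}^o = \bm{Z}^o$ and is therefore the same in both trees. In particular the two trees share every cherry (pair of tips with a common interior neighbor), and since any bifurcating tree with $N \geq 4$ possesses at least one cherry, I can select such a pair $(v_1, v_2)$ whose common interior neighbor $w$ has a specific feature $\widehat{\bm{x}}_w$ appearing in both multisets. I then prune the cherry: remove $v_1, v_2$ and promote $w$ to a tip in a smaller bifurcating tree on $N-1$ tips, with new tip-feature set $\{\bm{x}_v : v \neq v_1, v_2\} \cup \{\widehat{\bm{x}}_w\}$. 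This set remains linearly independent because $\widehat{\bm{x}}_w$ has strictly positive weight on $\bm{x}_{v_1}$ and $\bm{x}_{v_2}$ by Theorem \ref{thm:simplex}.

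To close the induction, I must verify that the remaining interior features $\{\widehat{\bm{x}}_u : u \neq w\}$ actually minimize the Dirichlet energy on the pruned tree. This is a direct check: the local balance condition \eqref{eq:local_balance_condition} at every $u \neq w$ involves only neighbors that survive the pruning, so it continues to hold, and the Dirichlet problem has a unique solution on the pruned tree. The same argument applies symmetrically to $\tau_z$, so the two pruned trees have identical tip features and identical interior-feature multisets; the inductive hypothesis then gives equality of the pruned trees, and reattaching the shared cherry at $w$ yields $\tau_x = \tau_z$. The main obstacle I anticipate is ensuring that feature-based identifications line up correctly across the two trees, since $\bm{X}^o$ and $\bm{Z}^o$ are only equal as multisets and their interior nodes are a priori unrelated; this is resolved by observing that every reconstruction step (coefficient extraction, argmax selection, cherry detection, identification of the promoted tip) depends only on the features themselves and not on any labeling of interior nodes, so the two trees undergo identical operations and the induction closes cleanly.
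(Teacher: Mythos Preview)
Your proposal is correct and follows essentially the same route as the paper: induction on $N$, using Lemma~\ref{lemma:merge} to recover the tip-to-interior adjacency, pigeonhole to locate a cherry, collapsing that cherry to a new tip, checking that the reduced tip features remain linearly independent, and invoking the inductive hypothesis. Your write-up is in fact more careful than the paper's sketch---you explicitly verify that the surviving features still satisfy~\eqref{eq:local_balance_condition} on the pruned tree and you address the multiset-versus-labeled-nodes subtlety head-on---but the underlying argument is the same.
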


The proofs of Lemma \ref{lemma:merge} and Theorem \ref{thm:id} are provided in Appendix \ref{sec:proof}.
By Theorem \ref{thm:id}, we see that the proposed node embeddings are complete representations of phylogenetic tree topologies with no information loss.

\subsection{Structural Representation Learning via Graph Neural Networks}
Using node embeddings introduced in section \ref{sec:nodeemb} as raw features, we now show how to learn more sophisticated representations of tree structures for different phylogenetic inference tasks via GNNs.
Given a tree topology $\tau$, let $\{\bm{h}_v^{(0)}:v\in V\}$ be the raw features and $\{\bm{h}_v^{(T)}:v\in V\}$ be the output features after the final iteration of GNNs.
We feed these output features of GNNs into a multi-layer perceptron (MLP) to get a set of learnable features for each node
\[
\bm{h}_v = \mathrm{MLP}^{(0)}\left(\bm{h}_v^{(T)}\right),\quad \forall\; v\in V,
\]
before adapting to different downstream tasks, as demonstrated in the following examples.

\subsubsection{Energy Based Models for Tree Probability Estimation}\label{sec:ebm}
Our first example is on graph-level representation learning of phylogenetic tree topologies.
Let $\mathcal{T}$ denote the entire tree topology space.
Given learnable node features of tree topologies, one can use a permutation invariant function $g$ to obtain graph-level features and hence create an energy function $F_{\bm{\phi}}: \mathcal{T}\mapsto \mathbb{R}$ that assigns each tree topology a scalar value as follows
\[
F_{\bm{\phi}}(\tau) = \mathrm{MLP}^{(1)}(\bm{h}_G),\quad \bm{h}_G = g\left(\left\{\bm{h}_v: v\in V\right\}\right).
\]
where $g\circ\mathrm{MLP}^{(0)}$ can be viewed as a $\mathrm{READOUT}$ function in section \ref{sec:background}.
This allows us to construct energy based models (EBMs) for tree probability estimation
\[
q_{\bm{\phi}}(\tau) = \frac{\exp\left(-F_{\bm{\phi}}(\tau)\right)}{Z(\bm{\phi})},\quad Z(\bm{\phi}) = \sum\nolimits_{\tau\in\mathcal{T}} \exp\left(-F_{\bm{\phi}}(\tau)\right).
\]
As $Z(\bm{\phi})$ is usually intractable, we can employ noise contrastive estimation (NCE) \citep{NCE} to train these energy based models.
Let $p_n$ be some noise distribution that has tractable density and allows efficient sampling procedures.
Let $D_{\bm{\phi}}(\tau) = \log q_{\bm{\phi}}(\tau) - \log p_n(\tau)$.
We can train $D_{\bm{\phi}}$ \footnote{Here $Z(\bm{\phi})$ is taken as a free parameter and is included into $\bm{\phi}$.} to minimize the following objective function (NCE loss)
\[
J(\bm{\phi}) = - \left(\mathbb{E}_{\tau\sim p_{\mathrm{data}}(\tau)}\log \left(S\left(D_{\bm{\phi}}(\tau)\right)\right) + \mathbb{E}_{\tau\sim p_n(\tau)} \log \left(1-S\left(D_{\bm{\phi}}(\tau)\right)\right)\right),
\]
where $S(x) = \frac{1}{1+\exp(-x)}$ is the sigmoid function.
It is easy to verify that the minimum is achieved at $D_{\bm{\phi}^\ast}(\tau) = \log p_{\mathrm{data}}(\tau) - \log p_n(\tau)$.
Therefore, $q_{\bm{\phi}^\ast}(\tau) = p_{\mathrm{data}}(\tau) = p_n(\tau)\exp\left(D_{\bm{\phi}^\ast}(\tau)\right)$.

\subsubsection{Branch Length Parameterization for VBPI}\label{sec:bl_vbpi}
The branch length parameterization in VBPI so far has relied on hand-engineered features (i.e., splits and PSPs) for the edges on tree topologies.
Let $\mathbb{S}_\mathrm{r}$ denote the set of splits and $\mathbb{S}_\mathrm{psp}$ denote the set of PSPs.
The simple split-based parameterization assigns parameters $\bm{\psi}^\mu, \bm{\psi}^\sigma$ for splits in $\mathbb{S}_\mathrm{r}$.
The mean and standard deviation for each edge $e$ on $\tau$ are then given by the associated parameters of the corresponding split $e/\tau$ as follows
\begin{equation}\label{eq:split_amortization}
 \mu(e,\tau) = \psi_{e/\tau}^\mu, \quad \sigma(e,\tau) = \psi_{e/\tau}^\sigma.
\end{equation}
The more flexible PSP parameterization assigns additional parameters for PSPs in $\mathbb{S}_\mathrm{psp}$ and adds the associated parameters of the corresponding PSPs $e/\!\!/\tau$ to \eqref{eq:split_amortization} to refine the mean and standard deviation parameterization
\begin{equation}\label{eq:psp_amortization}
 \mu(e,\tau) = \psi_{e/\tau}^\mu + \sum\nolimits_{s\in e/\!\!/\tau} \psi_s^\mu, \; \; \sigma(e,\tau) = \psi_{e/\tau}^\sigma + \sum\nolimits_{s\in e/\!\!/\tau} \psi_s^\sigma.
\end{equation}
Although these heuristic features prove effective, they often require substantial design effort, a sample of tree topologies for feature collection, and can not adapt themselves during training which makes it difficult for amortized inference over different tree topologies.
Based on the learnable node features, we can design a more flexible branch length parameterization that is capable of distilling more effective structural information of tree topologies for variational approximations.
For each edge $e=(u,v)$ on $\tau$, similarly as in section \ref{sec:ebm}, one can use a permutation invariant function $f$ to obtain edge-level features and transform them into the mean and standard deviation parameters as follows
\begin{equation}\label{eq:gnn_amortization}
\mu(e,\tau) = \mathrm{MLP}^{\mu}\left(\bm{h}_e\right),\quad \sigma(e,\tau) = \mathrm{MLP}^{\sigma}\left(\bm{h}_e\right),\quad \bm{h}_e = f\left(\left\{\bm{h}_u, \bm{h}_v\right\}\right). 
\end{equation}
Compared to heuristic feature based parameterizations in \ref{eq:split_amortization} and \ref{eq:psp_amortization}, learnable topological feature based parameterizations in \ref{eq:gnn_amortization} allow much richer design for the branch length distributions across different tree topologies and do not require pre-sampled tree topologies for feature collection.

\section{Experiments}
\label{sec:experiments}

In this section, we test the effectiveness and efficiency of learnable topological features for phylogenetic inference on the two aforementioned benchmark tasks: tree probability estimation via energy based models and branch length parameterization for VBPI.
Following \citet{VBPI}, in VBPI we used the simplest SBN for the tree topology variational distribution, and the CPT supports
\newpage
were estimated from ultrafast maximum likelihood phylogenetic bootstrap trees using UFBoot \citep{UFBOOT}.
The code is available at \url{https://github.com/zcrabbit/vbpi-gnn}.

\paragraph{Experimental setup.}
We evaluate five commonly used GNN variants with the following convolution operators: graph convolution networks (GCN), graph isomorphism operator (GIN), GraphSAGE operator (SAGE), gated graph convolution operator (GGNN) and edge convolution operator (EDGE).
See more details about these convolution operators in Appendix \ref{sec:GNNOPS}.
In addition to the above GNN variants, we also considered a simpler model that skips all GNN iterations (i.e., $T=0$) and referred to it as MLP in the sequel.
All GNN variants have 2 GNN layers (including the input layer), and all involved MLPs have 2 layers.
We used summation as our permutation invariant aggregation function for graph-level features and maximization for edge-level features.
All models were implemented in Pytorch \citep{Pytorch} with the Adam optimizer \citep{ADAM}.
We designed our experiments with the goals of (i) verifying the effectiveness of GNN-based EBMs for tree topology estimation and (ii) verifying the improvement of GNN-based branch length parameterization for VBPI over the baseline approaches (i.e., split and PSP based parameterizations) and investigating how helpful the learnable topological features are for reducing the amortization gaps.

\subsection{Simulated Data Tree Probability Estimation}
We first investigated the representative power of learnable topological features for approximating distributions on phylogenetic trees using energy based models (EBMs), and conducted experiments on a simulated data set.
We used the space of unrooted phylogenetic trees with 8 leaves, which contains 10395 unique trees in total.
Similarly as in \citet{VBPI}, we generated a target distribution $p_0(\tau)$ by drawing a sample from the symmetric Dirichlet distribution $\mathrm{Dir}(\beta\bm{1})$ of order 10395 with a pre-selected arbitrary order of trees.
The concentration parameter $\beta$ is used to control the diffuseness of the target distribution and was set to 0.008 to provide enough information for inference while allowing for adequate diffusion in the target.
As mentioned earlier in section \ref{sec:ebm}, we used noise contrastive estimation (NCE) to train our EBMs where we set the noise distribution $p_n(\tau)$ to be the uniform distribution.
Results were collected after 200,000 parameter updates.
Note that the minimum NCE loss in this case is
\[
J^\ast = - 2\mathrm{JSD}\left(p_0(\tau)\|p_n(\tau)\right) + 2\log 2,
\]
where $\mathrm{JSD}(\cdot\|\cdot)$ is the Jensen-Shannon divergence.

\begin{figure}
\begin{center}
\includegraphics[width=1.0\textwidth]{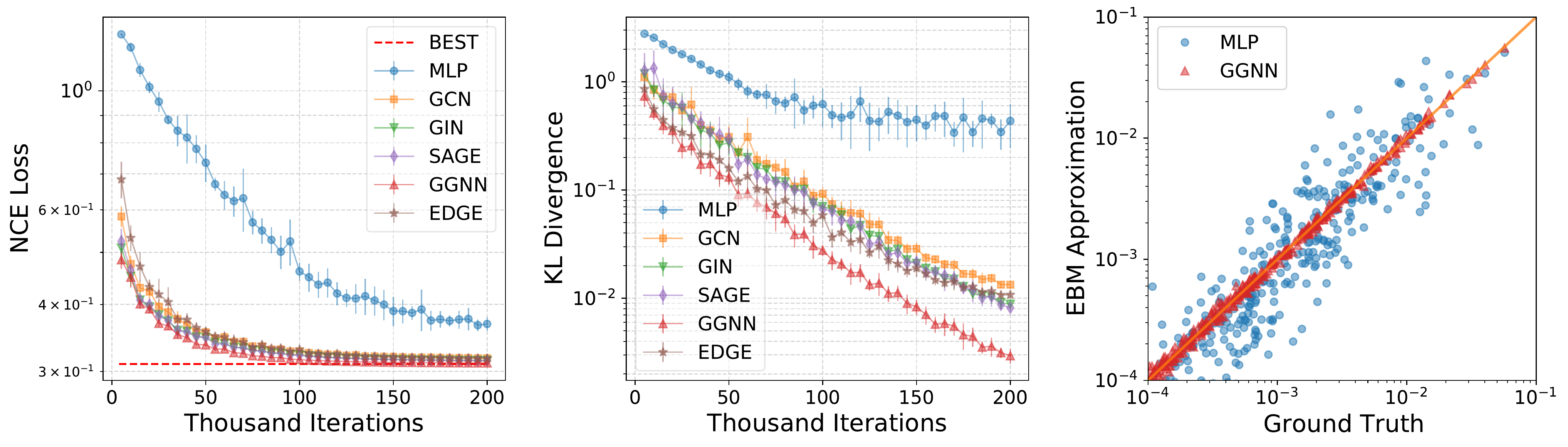}
\end{center}
\caption{Comparison of learnable topological feature based EBMs for probability mass estimation of unrooted phylogenetic trees with 8 leaves using NCE.
{\bf Left:} NCE loss.
{\bf Middle:} KL divergence.
{\bf Right:} EBM approximations vs ground truth probabilities.
The NCE loss and KL divergence results were obtained from 10 independent runs and the error bars represent one standard deviation.}\label{fig:ebm}
\end{figure}

Figure \ref{fig:ebm} shows the empirical performance of different methods.
From the left plot, we see that the NCE losses converge rapidly and the gaps between NCE losses for the GNN variants and the best NCE loss $J^\ast$ (dashed red line) are close to zero, demonstrating the representative power of learnable topological features on phylogenetic tree probability estimations.
The evolution of KL divergences (middle plot) is consistent with the NCE losses.
Compared to MLP, all GNN variants perform better, indicating that the extra flexibility provided by GNN iterations is crucial for tree probability estimation that would benefit from more informative graph-level features.
Although the raw features from interior node embedding contain all information of phylogenetic tree topologies, we see that distilling effective structural information from them is still challenging.
This makes GNN models that are by design more capable of learning geometric representations a favorable choice.
The right plot compares the probability mass approximations provided by EBMs using MLP and GGNN (which performs the best among all GNN variants), to the ground truth $p_0(\tau)$. 
We see that EBMs using GGNN consistently provide accurate approximations for trees across a wide range of probabilities.
On the other hand, estimates provided by those using MLP are often of large bias, except for a few trees with high probabilities.

%

\subsection{Real Data Variational Bayesian Phylogenetic Inference}
The second task we considered is VBPI, where we compared learnable topological feature based branch length parameterizations to heuristic feature based parameterizations (denoted as Split and PSP resepectively) proposed in the original VBPI approach \citep{VBPI}.
All methods were evaluated on 8 real datasets that are commonly used to benchmark Bayesian phylogenetic inference methods \citep{Hedges90, Garey96, Yang03, Henk03,  Lakner08, Zhang01, Yoder04, Rossman01, Hhna2012-pm,Larget2013-et, Whidden2015-eq}.
These datasets, which we call DS1-8, consist of sequences from 27 to 64 eukaryote species with 378 to 2520 site observations.
We concentrate on the most challenging part of the Bayesian phylogenetics: joint learning of the tree topologies and the branch lengths, and assume a uniform prior on the tree topology, an i.i.d. exponential prior ($\mathrm{Exp}$(10)) for the branch lengths and the simple \citet{Jukes69} substitution model.
We gathered the support of CPTs from 10 replicates of 10000 ultrafast maximum likelihood bootstrap trees \citep{UFBOOT}.
We set $K=10$ for the multi-sample lower bound, with a schedule $\lambda_n = \min(1, 0.001+n/100000)$, going from 0.001 to 1 after 100000 iterations.
The Monte Carlo gradient estimates for the tree topology parameters and branch length parameters were obtained via VIMCO \citep{VIMCO} and the reparameterization trick \citep{VAE} respectively.
Results were collected after 400,000 parameter updates.

\begin{table}
\caption{Evidence Lower bound (\textsc{ELBO}) and marginal likelihood (\textsc{ML}) estimates of different methods across 8 benchmark datasets for Bayesian phylogenetic inference. The marginal likelihood estimates of all variational methods are obtained via importance sampling using 1000 samples, and the results (in units of nats) are averaged over 100 independent runs with standard deviation in brackets.
Results for stepping-stone (SS) are from \citet{VBPI}(using 10 independent MrBayes \citep{Ronquist12} runs, each with 4 chains for 10,000,000 iterations and sampled every 100 iterations).
}\label{tab:lbml}
\begin{center}
\begin{footnotesize}
\begin{sc}
\scalebox{0.61}{
\hspace{-0.25in}
\begin{tabular}{clcccccccc}
\cmidrule[1pt]{2-10}\\[-0.7em]
 & Data set & DS1 & DS2 & DS3 & DS4 & DS5 & DS6 & DS7 & DS8 \\[0.4em]
 & \# Taxa & 27 &  29 &  36 & 41 & 50 &  50 & 59   &  64  \\[0.4em]
 & \# Sites & 1949  & 2520 & 1812  & 1137  & 378   & 1133  & 1824  &  1008  \\[0.4em]
\cmidrule[0.5pt]{2-10}\\[-0.7em]
 & Split    & -7112.16(2.00) & -26369.95(0.89)  & -33736.87(0.36)& -13332.82(0.75) & -8218.86(0.22) & -6729.49(0.47)& -37335.44(0.12) & -8661.56(2.00) \\[0.4em]
 & PSP    & -7111.23(1.21) & -26369.43(0.64)  & -33736.71(0.45) & -13332.42(0.65)& -8218.34(0.16)&-6729.20(0.44) & -37335.18(0.13)  &-8655.40(0.32)  \\[0.4em]
 & MLP    & -7110.43(0.10) & -26368.85(0.10) & -33736.25(0.05)& -13331.99(0.09) &-8218.23(0.12) &-6728.98(0.17) & -37334.99(0.12)  & -8655.33(0.15) \\[0.4em]
 & GCN   & -7110.32(0.10) & -26368.88(0.09)  & -33736.25(0.05) & -13331.94(0.09)& -8218.06(0.10)&-6728.78(0.16) & -37334.92(0.12) &-8655.17(0.15)  \\[0.4em]
 & GIN     & -7110.27(0.09) & -26368.86(0.09)  & -33736.26(0.07) & -13331.82(0.09)& -8217.88(0.11)&-6728.59(0.17) & -37334.94(0.11) &{\bfseries -8655.00(0.15)}  \\[0.4em]
\rot{\rlap{~~~~~ELBO}}  & SAGE  & -7110.29(0.10) & {\bfseries -26368.84(0.07)}  & -33736.28(0.06) & -13331.84(0.10)& -8217.92(0.11)&-6728.63(0.15) & -37334.91(0.11) &-8655.05(0.14)  \\[0.4em]
 & GGNN & {\bfseries -7110.26(0.10)} & {\bfseries -26368.84(0.10)}  & {\bfseries -33736.20(0.06)} & {\bfseries -13331.79(0.09)} & -8217.88(0.11)&{\bfseries -6728.56(0.16)} & -37334.87(0.12) &-8655.01(0.15)  \\[0.4em]
 & EDGE  & {\bfseries -7110.26(0.10)} & {\bfseries -26368.84(0.09)}  & -33736.25(0.08) & -13331.80(0.10)& {\bfseries -8217.80(0.12)} &-6728.57(0.16) & {\bfseries -37334.84(0.14)} &-8655.01(0.14)  \\[0.4em]
\cmidrule[0.5pt]{2-10}\\[-0.7em]
 & Split & -7108.47(0.27) & -26367.73(0.08)  & -33735.12(0.12) & -13330.01(0.31) & -8214.83(0.51) & -6724.58(0.48)& -37332.18(0.43)  & -8651.39(0.94) \\[0.4em]
 & PSP & -7108.41(0.19) & -26367.74(0.09)  & -33735.12(0.10) & -13329.96(0.22) &-8214.66(0.46) & -6724.41(0.49)&-37332.05(0.33)  & -8650.66(0.51) \\[0.4em]
 & MLP & -7108.41(0.24) &  -26367.74(0.08) & -33735.12(0.10)  &-13329.99(0.22)&-8214.77(0.47) &-6724.47(0.47) & -37332.03(0.34) & -8650.72(0.53) \\[0.4em]
& GCN &-7108.43(0.16)  & -26367.73(0.08)  & -33735.12(0.11) & -13329.97(0.22) &-8214.69(0.45) &-6724.51(0.47) & -37332.04(0.29) &-8650.68(0.54)  \\[0.4em]
& GIN &-7108.40(0.22)  &  -26367.73(0.08)  & -33735.12(0.10) & -13329.96(0.18) &-8214.64(0.39) &{\bfseries -6724.41(0.40)} & -37332.02(0.30) &{\bfseries -8650.66(0.45)}  \\[0.4em]
\rot{\rlap{~~~~ML}}& SAGE &-7108.40(0.17)  & {\bfseries -26367.73(0.07)}  & {\bfseries -33735.12(0.09)} & {\bfseries -13329.96(0.17)} &-8214.64(0.44) &-6724.39(0.42) & -37332.01(0.30)  &-8650.66(0.49)  \\[0.4em]
& GGNN &-7108.40(0.19)  & -26367.73(0.10)  & {\bfseries -33735.11(0.09)} & -13329.95(0.19) & -8214.67(0.36) &-6724.38(0.42) & -37332.03(0.30)  &-8650.68(0.48)  \\[0.4em]
& EDGE & {\bfseries -7108.41(0.14)}  & {\bfseries -26367.73(0.07)}  & {\bfseries -33735.12(0.09)} & -13329.94(0.19) &-8214.64(0.38) &{\bfseries -6724.37(0.40)} &{\bfseries -37332.04(0.26)}  &{\bfseries -8650.65(0.45)}  \\[0.4em]
  & SS & -7108.42(0.18) & -26367.57(0.48) & -33735.44(0.50) & -13330.06(0.54) & {\bfseries -8214.51(0.28)} & -6724.07(0.86) & -37332.76(2.42) & -8649.88(1.75)  \\[0.4em] 
\cmidrule[1pt]{2-10}
\end{tabular}
}
\end{sc}
\end{footnotesize}
\end{center}
\end{table}

Table \ref{tab:lbml} shows the estimates of the evidence lower bound (ELBO) and the marginal likelihood using different branch length parameterizations on the 8 benchmark datasets, including the results for the stepping-stone (SS) method \citep{SS}, which is one of the state-of-the-art sampling based methods for marginal likelihood estimation.
For each data set, a better approximation would lead to a smaller variance of the marginal likelihood estimates.
We see that solely using the raw features, MLP-based parameterization already outperformed the Split and PSP baselines by providing tighter lower bounds.
With more expressive representations of local structures enabled by GNN iterations, GNN-based parameterization further improved upon MLP-based methods, indicating the importance of harnessing local topological information for flexible branch length distributions.
Moreover, when used as importance distributions for marginal likelihood estimation via importance sampling, MLP and GNN variants provide more steady estimates (less variance) than Split and PSP respectively.
All variational approaches compare favorably to SS and require much fewer samples.
The left plot in Figure \ref{fig:vbpi} shows the evidence lower bounds as a function of the number of parameter updates on DS1.
Although neural networks based parameterization adds to the complexity of training in VI, we see that by the time Split and PSP converge, MLP and EDGE\footnote{We use EDGE as an example here for branch length parameterization since it can learn edge features (see Appendix \ref{sec:GNNOPS}). All GNN variants (except the simple GCN) performed similarly in this example (see Table \ref{tab:lbml}).} achieve comparable (if not better) lower bounds and quickly surpass these baselines as the number of iteration increases.

As diagonal Lognormal branch length distributions were used for all parameterization methods, how these variational distributions were amortized over tree topologies under different parameterizations therefore is crucial for the overall approximation performance.
To better understand this effect of amortized inference, we further investigated the amortization gaps\footnote{The amortization gap on a tree topology $\tau$ is defined as $L(Q^\ast|\tau) - L(Q_{\bm{\psi}}|\tau)$, where $L(Q_{\bm{\psi}}|\tau)$ is the ELBO of the approximating distribution $Q_{\bm{\psi}}(\bm{q}|\tau)$ and $L(Q^\ast|\tau)$ is the maximum lower bound that can be achieved with the same variational family. See more details in \citet{VBPI-NF, Cremer18}.} of different methods on individual trees in the 95\% credible set of DS1 as in \citet{VBPI-NF}.
The middle and right plots in Figure \ref{fig:vbpi} show the amortization gaps of different parameterization methods on each tree topology $\tau$.
We see the amortization gaps of MLP and EDGE are considerably smaller than those of Split and PSP respectively, showing the efficiency of learnable topological features for amortized branch length distributions.
Again, incorporating more local topological information is beneficial, as evidenced by the significant improvement of EDGE over MLP.
More results about the amortization gaps can be found in Table \ref{tab:gaps} in the appendix.

\begin{figure}
\begin{center}
\includegraphics[width=1.0\textwidth]{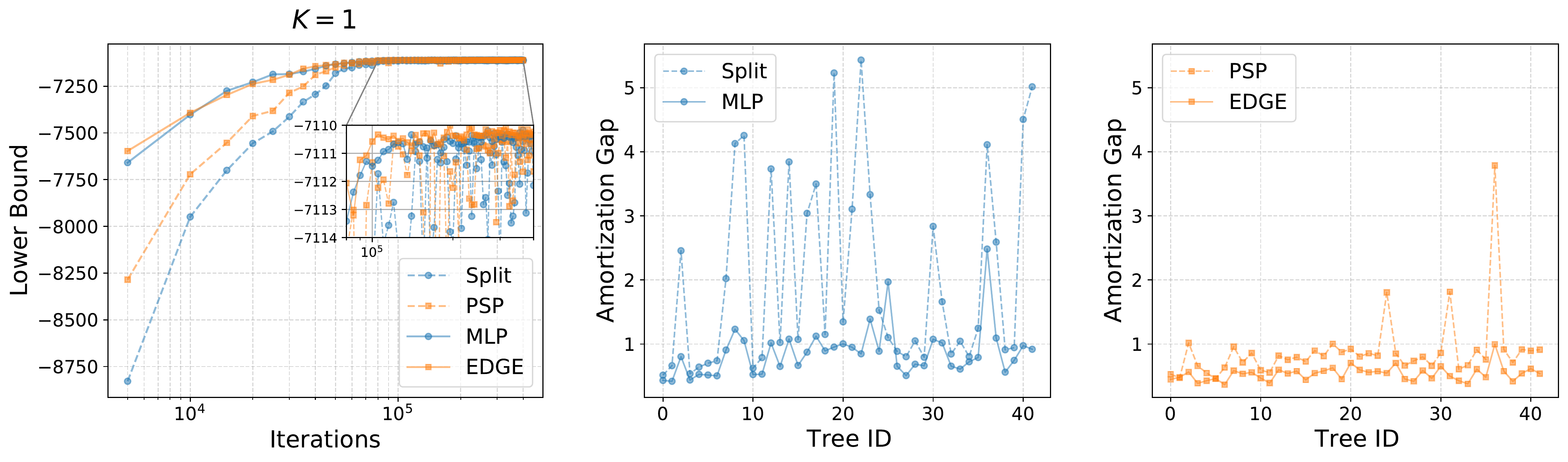}
\end{center}
\caption{Performance on DS1.
{\bf Left:} Lower bounds.
{\bf Middle \& Right:} Amortization gaps on trees in the $95\%$ credible sets.}\label{fig:vbpi}
\end{figure}

\section{Conclusion}
We presented a novel approach for phylogenetic inference based on learnable topological features.
By combining the raw node features that minimize the Dirichlet energy with modern GNN variants, our learnable topological features can provide efficient structural information without requiring domain expertise.
In experiments, we demonstrated the effectiveness of our approach for tree probability estimation on simulated data and showed that our method consistently outperforms the baseline approaches for VBPI on a benchmark of real data sets.
Future work would investigate more sophisticated GNNs for phylogenetic trees, and applications to other phylogenetic inference tasks where efficiently leveraging structural information of tree topologies is of great importance.

\subsubsection*{Acknowledgments}
This work was supported by National Natural Science Foundation of China (grant no. 12201014), as well as National
Institutes of Health grant AI162611. The research of the author was support in part by the Key Laboratory of Mathematics and Its Applications (LMAM)
and the Key Laboratory of Mathematical Economics and Quantitative Finance (LMEQF) of Peking
University. The author is grateful for the computational resources provided by the High-performance
Computing Platform of Peking University. The author appreciates the anonymous ICLR reviewers
for their constructive feedback.

\bibliography{deep-vbpi}
\bibliographystyle{iclr2023_conference}

\newpage
\appendix

\section{Subsplit Bayesian Networks}\label{sec:sbn}

\begin{figure}[H]
\begin{center}
\input{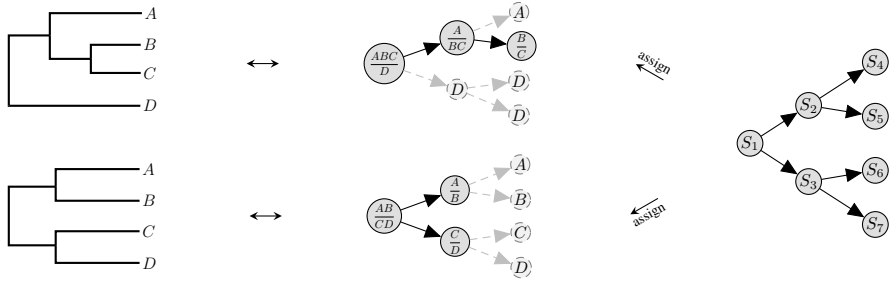}
\caption{
A simple subsplit Bayesian network for a leaf set that contains 4 species A, B, C and D.
{\bf Left}: Some rooted phylogenetic tree examples.
{\bf Middle}: The corresponding SBN assignments.
For ease of illustration, subsplit $(W,Z)$ is represented as $\frac{W}{Z}$ in the graph.
The \emph{dashed gray subgraphs} represent fake splitting processes where splits are deterministically assigned, and are used purely to complement the networks such that the overall network has a fixed structure.
{\bf Right}: The SBN for these examples. This figure is adapted from \citet{VBPI}.}
\label{fig:sbn}
\end{center}
\end{figure}

Subsplit Bayesian networks (SBNs) were introduced by \citet{SBN} for density estimation on tree topologies and were later used to provide a family of variational distributions for Bayesian phylogenetic inference \citep{VBPI}.
Given a leaf set $\mathcal{X}$ of size $N$, one can define a subsplit Bayesian network $B_{\mathcal{X}}$ to be a Bayesian network whose nodes take on subsplit or singleton clade values that represent the local topological structures of trees (Figure \ref{fig:sbn}).
This formulation allows us to represent rooted tree topologies as SBN assignments.
More specifically, one can follow the splitting process (see the \emph{solid dark subgraphs} in Figure \ref{fig:sbn}, middle) of the tree and assign the subsplits to the corresponding nodes along the way to get a unique subsplit decomposition of the tree topology.
Given the subsplit decomposition of a rooted tree $\tau=\{s_1, s_2, \ldots\}$, where $s_1$ is the root subsplit, the SBN-induced tree probability of $\tau$ is
\[
p_{\mathrm{sbn}}(T=\tau) = p(S_1=s_1)\prod_{i>1}p(S_i=s_i|S_{\pi_i}=s_{\pi_i})
\]
where $S_i$ denote the subsplit- or singleton-clade-valued random variables at node $i$ and $\pi_i$ is the index set of the parents of $S_i$.
As Bayesian networks, SBN-induced distributions are all naturally normalized.
We can also adjust the structures of SBNs for a wide range of expressive distributions, as long as they remain valid directed acyclic graphs (DAGs).
In practice, the simplest SBN (the one with a full and complete binary tree structure as shown in Figure \ref{fig:sbn}) is often found to be good enough.

The SBN framework also generalizes to unrooted trees, which are the most common type of phylogenetic trees.
The key is to view unrooted trees as rooted trees with missing roots. 
Marginalizing out the unobserved root nodes leads to the SBN probability estimates for unrooted trees
\[
p_{\mathrm{sbn}}(T^{\mathrm{u}}=\tau) = \sum_{s_1\sim\tau}p(S_1=s_1)\prod_{i>1}p(S_i=s_i|S_{\pi_i}=s_{\pi_i})
\]
where $\sim$ means all root subsplits that are compatible with $\tau$ (i.e., root subsplits of the edges of $\tau$).

We can evaluate the SBN probabilities of tree topologies efficiently through a two pass algorithm \citep{SBN}. 
As Bayesian networks, SBNs also allow fast sampling procedures (i.e., ancestral sampling). 
These properties make SBNs a natural choice for variational inference.

Parameterizing SBNs in VBPI requires a sufficiently large subsplit \emph{support} of CPTs (i.e., where the associate conditional probabilities are allowed to take nonzero values) that covers favorable parent child subsplit pairs from trees with high posterior probabilities.
In practice, a simple bootstrap-based approach has been found effective for providing such a support \citep{VBPI}.
Let $\mathbb{S}_\mathrm{r}$ denote the set of root subsplits (e.g., the splits) in the support and $\mathbb{S}_{\mathrm{ch}|\mathrm{pa}}$ denote the set of parent-child subsplit pairs in the support. The CPTs can be defined via the softmax function as follows
\[
p(S_1=s_1) =\frac{\exp(\phi_{s_1})}{\sum_{s_\mathrm{r}\in\mathbb{S}_\mathrm{r}}\exp(\phi_{s_\mathrm{r}})}, \quad p(S_i=s|S_{\pi_i}=t) = \frac{\exp(\phi_{s|t})}{\sum_{s\in\mathbb{S}_{\cdot|t}}\exp(\phi_{s|t})}.
\]


\section{The Two-pass Algorithm for Interior Node Embedding}\label{sec:two_pass_algo}
Due to the hierarchical structure of phylogenetic trees, we derive a linear time two-pass algorithm for solving the linear equations \ref{eq:local_balance_condition}.
In the first pass, we traverse the tree in a postorder fashion and express the node features as a linear function of those of their parents as follows
\[
\widehat{\bm{x}}_u = c_u\widehat{\bm{x}}_{\pi_u} + \bm{d}_u,
\]
for all the nodes except the root node, where $\pi_u$ is the parent node of $u$.
For all leaf nodes $u\in V^b$, this is straightforward: $c_u = 0, \; \bm{d}_u = \bm{x}_u$.
For any interior node $u$ that is not the root node, as a result of postorder traversal, $c_v, \bm{d}_v$ is available $\forall v\in \mathrm{ch}(u)$ when $u$ is visited.
Therefore, we can rewrite the equation about $u$ in \ref{eq:local_balance_condition} as
\[
\begin{aligned}
|\mathcal{N}(u)| \widehat{\bm{x}}_u &= \widehat{\bm{x}}_{\pi_u} + \sum_{v\in\mathrm{ch}(u)} \widehat{\bm{x}}_v\\
&= \widehat{\bm{x}}_{\pi_u} + \sum_{v\in\mathrm{ch}(u)} \left(c_v\widehat{\bm{x}}_u + \bm{d}_v\right).
\end{aligned}
\]
This implies
\[
\widehat{\bm{x}}_u = \frac{1}{|\mathcal{N}(u)| - \sum_{v\in\mathrm{ch}(u)} c_v} \cdot \widehat{\bm{x}}_{\pi_u} + \frac{\sum_{v\in\mathrm{ch}(u)}\bm{d}_v }{|\mathcal{N}(u)| - \sum_{v\in\mathrm{ch}(u)} c_v},
\]
which gives the recursive updating formula in \eqref{eq:update}.

In the second pass, we traverse the tree in a preorder fashion. We first visit the root node $r$.
From the first pass, $c_v, \bm{d}_v$ is available $\forall v\in \mathrm{ch}(r)$.
Similarly, from \eqref{eq:local_balance_condition}, we have
\[
\begin{aligned}
|\mathcal{N}(r)| \widehat{\bm{x}}_r &= \sum_{v\in\mathrm{ch}(r)} \widehat{\bm{x}}_v \\
&= \sum_{v\in\mathrm{ch}(r)}\left(c_v\widehat{\bm{x}}_r + \bm{d}_v \right)
\end{aligned}
\]
Therefore,
\[
\widehat{\bm{x}}_r = \frac{\sum_{v\in\mathrm{ch}(r)} \bm{d}_v }{|\mathcal{N}(r)| - \sum_{v\in\mathrm{ch}(r)}c_v}.
\]
For any other interior node $u$, as a result of preorder traversal, $\widehat{\bm{x}}_{\pi_u}$ is available when $u$ is visited.
We can compute its node feature $\widehat{\bm{x}}_u$ via \eqref{eq:forward}.

\section{Proofs for Lemmas, Corollaries, and Theorems}\label{sec:proof}
{\bf Proof for Lemma \ref{lemma:two_pass_stable}}
\twopasslemma*
\begin{proof}
We prove by induction. 
Since $|\mathcal{N}(u)|\geq 2, \; \forall u\in V^o$, we have $\lambda \geq 2$.
Note that $c_u = 0, \; \forall u \in V^b$.
Suppose $0\leq c_v \leq \frac{1}{\lambda-1},\; \forall v\in \mathrm{ch}(u)$, it suffices to show that $0\leq c_u \leq \frac{1}{\lambda-1}$ as long as $u$ is not the root node.
From the recursive updating formula, we have 
\[
0\leq c_u = \frac{1}{|\mathcal{N}(u)| - \sum_{v\in\mathrm{ch}(u)} c_v} \leq \frac{1}{|\mathcal{N}(u)| - \sum_{v\in\mathrm{ch}(u)} \frac{1}{\lambda-1}} = \frac{1}{|\mathcal{N}(u)| -\frac{|\mathcal{N}(u)|-1}{\lambda-1}}.
\]
As $2\leq\lambda\leq |\mathcal{N}(u)|$,
\[
\left(|\mathcal{N}(u)|-\lambda\right)\frac{\lambda-2}{\lambda-1} \geq 0 \Rightarrow |\mathcal{N}(u)|-\lambda \geq \frac{|\mathcal{N}(u)|-\lambda}{\lambda-1} \Rightarrow |\mathcal{N}(u)| -\frac{|\mathcal{N}(u)|-1}{\lambda-1} \geq \lambda -1.
\]
Therefore, $0\leq c_u\leq \frac{1}{\lambda-1}$.
\end{proof}
{\bf Remark.} For bifurcating phylogenetic trees, we have $\lambda = 3 \Rightarrow 0 \leq c_u\leq \frac12$. 

\newpage
{\bf Proof for Lemma \ref{lemma:ep}}
\EP*
\begin{proof}
From equations \ref{eq:local_balance_condition}, we have 
\[
\widehat{\bm{x}}_u = \frac{1}{|\mathcal{N}(u)|} \sum_{v\in\mathcal{N}(u)}\widehat{\bm{x}}_v,\quad \forall u\in V^o.
\]
In other words, for any interior node, its node feature is the mean of those of its neighbors.
Therefore, $\forall 1\leq n\leq d$, the extremum values of $\widehat{\bm{X}}[n]$ can be achieved at the boundary of the graph, i.e., the tip nodes.
On the other hand, if the extremum value of $\widehat{\bm{X}}[n]$ is achieved at some interior node $u$, then the extremum is also achieved at all the neighbors of $u$.
Since the tree topology is connected, this implies that $\widehat{\bm{x}}_u[n]$ is a constant $\forall u\in V$. 
\end{proof}

{\bf Proof for Theorem \ref{thm:simplex}}
\simplex*
\begin{proof}
As the tip node features are one hot vectors, $(i)$ follows immediately from Lemma \ref{lemma:ep}.
Let $S=\{\bm{x}\in\mathbb{R}^N|\sum_{n=1}^N\bm{x}[n]=1\}$ and $P_S$ be the projection onto $S$.
Since $\{\widehat{\bm{x}}_u\in\mathbb{R}^N|u\in V\}$ solves the linear system \ref{eq:local_balance_condition}, it minimizes the Dirichlet energy.
Now consider its projection $\{P_S(\widehat{\bm{x}}_u)\in\mathbb{R}^N|u\in V\}$.
Since one hot vectors are in $S$, $P_S(\widehat{\bm{x}}_u) = \widehat{\bm{x}}_u,\forall u \in V^b$.
Note that $P_S$ is a projection operator, we have
\[
\sum_{(u,v)\in E} \|P_S(\widehat{\bm{x}}_u)-P_S(\widehat{\bm{x}}_v)\|^2 \leq \sum_{(u,v)\in E} \|\widehat{\bm{x}}_u-\widehat{\bm{x}}_v\|^2.
\]
Note that $\{\widehat{\bm{x}}_u\in\mathbb{R}^N|u\in V\}$ minimizes the Dirichlet energy, the equality has to hold which implies that $\widehat{\bm{x}}_u-\widehat{\bm{x}}_v$ is parallel to the hyperplane $S$, $\forall (u,v)\in E$.
Therefore, $\widehat{\bm{x}}_u\in S, \forall u\in V^o$ as the tree topology is connected and the tip node features are already in $S$.
\end{proof}

{\bf Proof for Corollary \ref{coro:convex_hull}}
\convexhull*
\begin{proof}
Let $\mA$ be a matrix whose columns corresponds to the tip node features.
From the linear system \ref{eq:local_balance_condition}, we see that all interior node features lie in the column space of $\mA$ and hence can be uniquely represented as 
\[
\widehat{\bm{x}}_u = \mA \widehat{\bm{y}}_u, \quad \forall u \in V
\]
as the columns are linearly independent. Moreover, $\{\widehat{\bm{y}}_u\in\mathbb{R}^N|u\in V\}$ satisfies the same linear system \ref{eq:local_balance_condition}
\[
\sum_{v\in \mathcal{N}(u)}\left(\widehat{\bm{y}}_u-\widehat{\bm{y}}_v\right) = \bm{0},\quad \forall u \in V^o,\qquad \widehat{\bm{y}}_v = \bm{y}_v, \quad \forall v \in V^b,
\]
where $\{\bm{y}_v\in\mathbb{R}^N|v\in V^b\}$ are one hot encoding vectors.
By Theorem \ref{thm:simplex}, we have $\forall u \in V^o$
\[
(i)\; 0<\widehat{\bm{y}}_u[n]<1 , \quad \forall 1\leq n\leq N, \quad \text{and} \quad (ii)\; \sum_{n=1}^N \widehat{\bm{y}}_u[n] = 1.
\] 
which concludes the proof.
\end{proof}

{\bf Proof for Lemma \ref{lemma:merge}}
\merge*
\begin{proof}
For any tip node $v\in V^b$, let $\bar{v}\in\mathcal{N}(v)$ be the adjacent interior node of $v$.
It suffices to show that $a_{\bar{v}}^v > a_u^v,\forall u \in V^o\backslash\{\bar{v}\}$.
As $\{\bm{x}_v|v\in V^b\}$ are linearly independent, from equations \ref{eq:local_balance_condition} we have 
\begin{equation}\label{eq:coef}
a_u^v=\frac{\sum_{w\in\mathcal{N}(u)}a_w^v}{|\mathcal{N}(u)|}, \quad \forall u\in V^o.
\end{equation}
Now consider a new tree topology $\tau'$ that has $v$ removed from $\tau$.
Note that \eqref{eq:coef} still holds for all interior nodes except $\bar{v}$,
the maximum value of $\{a_u^v|u\in V\backslash\{v\}\}$ therefore can be achieved at either the boundary of $\tau'$ or $\bar{v}$.
As $a_u^v=0,\forall u\in V^b\backslash\{v\}$ and $a_{\bar{v}}^v > 0$ (by Corollary \ref{coro:convex_hull}), the maximum value hence is achieved at $a_{\bar{v}}^v$, i.e., $a_{\bar{v}}^v \geq a_u^v, \forall u\in V^o$.
Now suppose there exists an interior node $u\neq \bar{v}$ such that $a_u^v = a_{\bar{v}}^v$.
Similarly to Lemma \ref{lemma:ep}, this implies $a_u^v = a_{\bar{v}}^v, \forall u\in V\backslash\{v\}$, which is an obvious contradiction.
\end{proof}

{\bf Proof for Theorem \ref{thm:id}}
\ID*
\begin{proof}
Consider the case for unrooted trees first.
We prove by induction on the number of tip nodes $N$ of the tree topology.
For $N=3$, the tree topology is trivial.
If $N>3$, then for each tip node, Lemma \ref{lemma:merge} identifies its adjacent node by convex combination coefficient maximization.
As the number of interior nodes is less than the number of tip nodes, there must be two tip nodes that connect to the same interior node. 
Merging the two tip nodes into their shared neighbor reduces the problem to size $N-1$, with the shared neighbor being the new tip node.
It is easy to check that for this new set of node features, the tip node features are also linearly independent.
Therefore, the remaining part of the tree topology can be recovered by induction hypothesis.
The proof for rooted trees is similar.
\end{proof}

\section{More Results on Amortization Gaps}

\begin{table}[h!]
\caption{Amortization gaps on trees in the 95\% credible set of DS1.}\label{tab:gaps}
\begin{center}
\begin{sc}
\scalebox{1.0}{
\begin{tabular}{lcccccccc}
\toprule
& Split & PSP & MLP & GCN & GIN & SAGE & GGNN & EDGE \\
\midrule
Tree 24 &  1.53  & 1.81 & 0.89  &  1.28  & 0.56  &  {\bfseries 0.52}   & 0.54   & 0.55  \\ 
Tree 31  &   1.66   & 1.82  &  1.02 & 1.29  & 0.61   &  0.63  & 0.56   & {\bfseries 0.50}  \\
Tree 36   &   4.11     &   3.79  &  2.48   & 2.31  & 1.68    & 1.28  & 1.25  & {\bfseries 1.00} \\
Average  &   2.06     &   0.89  &  0.87   & 0.94  &    0.63  & 0.63  & 0.57   & {\bfseries 0.53}  \\
\bottomrule
\end{tabular}
}
\end{sc}
\end{center}
\end{table}

\section{Related Works}
Many deep learning approaches have also been proposed for phylogenetic models recently.
\citet{Solis-Lemus2022} use symmetry-preserving neural networks to explicitly encode permutation invariance of phylogenetic trees. However, their method requires enumeration of all permutations of tip nodes that would leave tree topologies unchanged and hence is challenging to extend to datasets with many sequences.
\citet{biology11091256} learn hyperbolic embeddings of gene sequences for phylogenetic tree placement and updates. While these embeddings maybe useful for sequentially updating species trees with only a handful of genes, they are not suitable for learning representations of tree topologies and their local structures with a given taxa set.
\citet{Fioravanti2018} use CNNs to incorporate phylogenetic information for the classification of metagenomics data where the phylogenetic tree is assumed to be known. Their method, therefore, is also not suitable for phylogenetic inference where the tree topology is unknown and needs to be inferred from the data.

\section{Details on Graph Convolutional Operators}\label{sec:GNNOPS}
The followings are the update and aggregate functions for the graph convolutional operators used in our experiments.
\begin{itemize}
\item Graph convolutional networks (GCN)
\[
\bm{h}_v^{(t+1)} = W^{(t)}\frac{\bm{m}_v^{(t+1)}}{\sqrt{1+d_{v}}}, \quad \bm{m}_v^{(t+1)} = \sum_{u\in\mathcal{N}(v)\cup\{v\}}\frac{\bm{h}_u^{(t)}}{\sqrt{1+d_u}},
\]
where $d_u$ is the degree of node $u$.
\item Graph isomorphism networks (GIN)
\[
\bm{h}_v^{(t+1)} = \mathrm{MLP}^{(t)}\left((1+\epsilon^{(t)})\bm{h}_v^{(t)} + \bm{m}_v^{(t+1)}\right), \quad \bm{m}_v^{(t+1)} = \sum_{u\in\mathcal{N}(v)}\bm{h}_u^{(t)},
\]
where $\epsilon^{(t)}$ can be either a learnable parameter or a fixed scalar.
\item GraphSAGE operator (SAGE)
\[
\bm{h}_v^{(t+1)} =  W_1^{(t)}\bm{h}_v^{(t)} + W_2^{(t)}\bm{m}_v^{(t+1)}, \quad \bm{m}_v^{(t+1)} = \frac{\sum_{u\in\mathcal{N}(v)}\bm{h}_u^{(t)}}{d_v}.
\]
\item Gated graph convolutional operator (GGNN)
\[
\bm{h}_v^{(t+1)} = \mathrm{GRU}^{(t)}\left(\bm{m}_v^{(t+1)}, \bm{h}_v^{(t)}\right), \quad \bm{m}_v^{(t+1)} = \sum_{u\in\mathcal{N}(v)}W^{(t)}\bm{h}_u^{(t)},
\]
where $\mathrm{GRU}^{(t)}$ is a Gated recurrent unit, a gating mechanism in recurrent neural networks.
\item Edge convolutional operator (EDGE)
\[
\bm{h}_v^{(t+1)} = \sum_{u\in\mathcal{N}(v)}\bm{e}_{u\rightarrow v}^{(t+1)}, \quad \bm{e}_{u\rightarrow v}^{(t+1)} = \mathrm{MLP}^{(t)}\left(\bm{h}_v^{(t)}\;|\!|\;\bm{h}_u^{(t)}-\bm{h}_v^{(t)}\right), \;\forall u\in\mathcal{N}(v)
\]
where $|\!|$ means concatenation and $\bm{e}_{u\rightarrow v}^{(t+1)}$ is the edge feature from node $u$ to node $v$.
\end{itemize}

\end{document}